\newif\ifarxiv
\newcommand{\1}{\mathbbm{1}}
\newcommand{\cC}{\mathcal{C}}
\DeclareMathOperator{\argmin}{\text{argmin}}
\newtheorem{lemma}{Lemma}
\newtheorem{theorem}{Theorem}
\newenvironment{proofof}[1]{\par{\noindent \textit{Proof of #1.}}}{\hspace*{\fill} $\qed$ \par}
\newcommand{\osa}{\textsf{One-Step-Ahead}}
\newcommand{\aosa}{\textsf{Almost-One-Step-Ahead}}
\newcommand{\ECE}{\mathrm{ECE}}
\newcommand{\CalDist}{\mathrm{CalDist}}
\title{\fontsize{15}{8} An Elementary Predictor Obtaining $2\sqrt{T} + 1$ Distance to Calibration}
\author{Eshwar Ram Arunachaleswaran}
\author{Natalie Collina} 
\author{Aaron Roth} 
\author{Mirah Shi}
\affil{Department of Computer and Information Sciences, University of Pennsylvania}
\author{}
\begin{document}

\maketitle

\begin{abstract}
   \cite{blasiok2023unifying} proposed \emph{distance to calibration} as a natural measure of calibration error that unlike expected calibration error (ECE) is continuous. Recently, \cite{qiao2024distance} (COLT 2024) gave a non-constructive argument establishing the existence of a randomized online predictor that can obtain $O(\sqrt{T})$ distance to calibration in expectation in the adversarial setting, which is known to be impossible for ECE. They leave as an open problem finding an explicit, efficient, deterministic  algorithm. We resolve this problem and give an extremely simple, efficient, deterministic algorithm that obtains distance to calibration error at most $2\sqrt{T} + 1$.
\end{abstract}
\setcounter{page}{0}
\thispagestyle{empty}
 \clearpage

\section{Introduction}
Probabilistic predictions of binary outcomes are said to be \emph{calibrated}, if, informally, they are unbiased conditional on their own predictions. For predictors that are not perfectly calibrated, there are a variety of ways to measure calibration error. Perhaps the most popular measure is Expected Calibration Error (ECE), which measures the average bias of the predictions, weighted by the frequency of the predictions. ECE has a number of difficulties as a measure of calibration, not least of which is that it is discontinuous in the predictions. Motivated by this, \cite{blasiok2023unifying} propose a different measure: distance to calibration, which measures how far a predictor is in $\ell_1$ distance from the nearest perfectly calibrated predictor. In the online adversarial setting, it has been known since \cite{foster1998asymptotic} how to make predictions with ECE growing at a rate of $O(T^{2/3})$. \cite{qiao2021stronger} show that obtaining $O(\sqrt{T})$ rates for ECE is impossible. Recently, in a COLT 2024 paper, \cite{qiao2024distance} showed that it was possible to make sequential predictions against an adversary guaranteeing expected distance to calibration growing at a rate of $O(\sqrt{T})$. Their algorithm is the solution to a minimax problem of size doubly-exponential in $T$. They leave as an open problem finding an explicit, efficient, deterministic algorithm for this problem. In this paper we resolve this problem, by giving an extremely simple such algorithm with an elementary analysis.

\begin{algorithm}[H]
 \KwIn{Sequence of outcomes $y^{1:T} \in \{0,1\}^T$}
  \KwOut{Sequence of predictions $p^{1:T} \in \{0, \frac{1}{m},..., 1\}^T$ for some discretization parameter $m>0$}
 \For{$t = 1$ \KwTo $T$}{   
    \hspace{.2em} Given look-ahead predictions $\tilde p^{1:t-1}$, define the look-ahead bias conditional on a prediction $p$ as:
    $$\alpha_{\tilde p^{1:t-1}}(p) := \sum_{s=1}^{t-1} \1[\tilde p^s = p] (\tilde p^s - y^s)$$
    Choose two adjacent points $p_i = \frac{i}{m}, p_{i+1} = \frac{i+1}{m}$ satisfying: $$\alpha_{\Tilde{p}^{1:t-1}}(p_i) \leq 0 \text{ and } \alpha_{\Tilde{p}^{1:t-1}}(p_{i+1}) \geq 0$$
    Arbitrarily predict ${p}^t = p_i$ or ${p}^t = p_{i+1}$\;
    \hspace{.2em} Upon observing the (adversarially chosen) outcome $y^t$, set look-ahead prediction $$\Tilde{p}^t = \argmin_{p \in \{p_i, p_{i+1}\}} |p - y^t|$$
 }
 \caption{$\aosa$}
 \label{alg:1}
\end{algorithm}

\section{Setting}
We study a sequential binary prediction setting: at every round $t$, a forecaster makes a prediction $p^t \in [0,1]$, after which an adversary reveals an outcome $y^t \in \{0,1\}$. Given a sequence of predictions $p^{1:T}$ and outcomes $y^{1:T}$, we measure expected calibration error (ECE) as follows:
\[
\ECE(p^{1:T}, y^{1:T}) = \sum_{p \in [0,1]} \left| \sum_{t=1}^T \1[p^t = p] (p^t - y^t) \right|
\]
Following \cite{qiao2024distance}, we define \textit{distance to calibration} to be the minimum $\ell_1$ distance between a sequence of predictions produced by a forecaster and any \textit{perfectly calibrated} sequence of predictions:
\[
\CalDist(p^{1:T}, y^{1:T}) = \min_{q^{1:T} \in \cC(y^{1:T})} \|p^{1:T} - q^{1:T}\|_1
\]
where $\cC(y^{1:T}) = \{ q^{1:T} : \ECE(q^{1:T}, y^{1:T}) = 0 \}$ is the set of predictions that are perfectly calibrated against outcomes $y^{1:T}$. First we observe that distance to calibration is upper bounded by ECE.

\begin{lemma}[\cite{qiao2024distance}]\label{lem:caldist} 
    Fix a sequence of predictions $p^{1:T}$ and outcomes $y^{1:T}$. Then, $\CalDist(p^{1:T}, y^{1:T}) \leq \ECE(p^{1:T}, y^{1:T})$.
\end{lemma}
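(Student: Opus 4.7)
The plan is to exhibit an explicit perfectly calibrated sequence $q^{1:T}$ that witnesses $\|p^{1:T}-q^{1:T}\|_1 \le ECE(p^{1:T},y^{1:T})$; then $CalDist$, being the minimum over all calibrated sequences, is at most this value.

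First I would bucket rounds by the predicted value. For each $p$ in the (finite) support of $p^{1:T}$, let $S_p = \{t : p^t = p\}$, $n_p = |S_p|$, and $\bar y_p = \frac{1}{n_p}\sum_{t \in S_p} y^t$. With this notation, the bias conditional on prediction $p$ simplifies as
\[
\sum_{t=1}^T \1[p^t = p](p^t - y^t) \;=\; n_p(p - \bar y_p),
\]
so $ECE(p^{1:T},y^{1:T}) = \sum_p n_p\,|p - \bar y_p|$.

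Second I would define the candidate calibrated sequence by $q^t := \bar y_{p^t}$, i.e., on every round where we predicted $p$, replace that prediction with the empirical label average over those rounds. To verify $q^{1:T} \in \cC(y^{1:T})$, fix any value $q$ in the support of $q^{1:T}$; the rounds $t$ with $q^t = q$ are exactly $\bigcup_{p : \bar y_p = q} S_p$, and
\[
\sum_{t : q^t = q}(q - y^t) \;=\; \sum_{p : \bar y_p = q}\sum_{t \in S_p}(\bar y_p - y^t) \;=\; 0,
\]
since each inner sum vanishes by definition of $\bar y_p$. Thus $q^{1:T}$ has zero ECE and is perfectly calibrated.

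Finally, I would compute the $\ell_1$ distance, again bucketed by prediction value:
\[
\|p^{1:T} - q^{1:T}\|_1 \;=\; \sum_{t=1}^T |p^t - \bar y_{p^t}| \;=\; \sum_p \sum_{t \in S_p} |p - \bar y_p| \;=\; \sum_p n_p\,|p - \bar y_p| \;=\; ECE(p^{1:T},y^{1:T}).
\]
Taking the minimum over all calibrated sequences gives $CalDist \le ECE$. There is no real obstacle here: the only thing to be careful about is that the construction $q^t = \bar y_{p^t}$ can collapse distinct original prediction values $p, p'$ to the same calibrated value whenever $\bar y_p = \bar y_{p'}$, which is precisely why checking perfect calibration requires summing over all such $p$ sharing a common $\bar y_p$ — the computation above handles this automatically.
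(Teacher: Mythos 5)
Your proof is correct and uses essentially the same construction as the paper: replace each prediction $p^t$ with the empirical average outcome $\bar y_{p^t}$ over all rounds sharing that prediction, then compute the resulting $\ell_1$ distance and show it equals $ECE$. The only (welcome) difference is that you explicitly verify perfect calibration of $q^{1:T}$, including the case where distinct values $p, p'$ collapse to the same $\bar y_p = \bar y_{p'}$, whereas the paper simply asserts it.
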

\begin{proof}
    For any prediction $p \in [0,1]$, define $$\overline{y}^T(p) = \sum_{t=1}^T \frac{\1[p^t = p]}{\sum_{t=1}^T \1[p^t = p]} y^t$$ to be the average outcome conditioned on the prediction $p$. Consider the sequence $q^{1:T}$ where $q^t = \overline{y}^T(p^t)$. Observe that $q^{1:T}$ is perfectly calibrated. Thus, we have that
    \begin{align*}
        \CalDist(p^{1:T}, y^{1:T}) &\leq \|p^{1:T} - q^{1:T}\|_1 \\
        &= \sum_{t=1}^T |p^t - q^t| \\
        &= \sum_{p\in[0,1]} \sum_{t=1}^T \1[p^t = p] |p - \overline{y}^T(p)| \\
        &= \sum_{p\in[0,1]} |p - \overline{y}^T(p)| \sum_{t=1}^T \1[p^t = p] \\
        &= \sum_{p\in[0,1]} \left|p \sum_{t=1}^T \1[p^t = p] - \overline{y}^T(p) \sum_{t=1}^T \1[p^t = p]\right| \\
        &= \sum_{p\in[0,1]} \left| \sum_{t=1}^T \1[p^t = p] (p - y^t) \right| \\
        &= \ECE(p^{1:T}, y^{1:T})
    \end{align*}  
\end{proof}

The upper bound is not tight, however. The best known sequential prediction algorithm obtains ECE bounded by  $O(T^{2/3})$ \citep{foster1998asymptotic}, and it is known that there is no algorithm guaranteeing ECE below $O(T^{0.54389})$ \citep{qiao2021stronger, dagan2024improved}. \cite{qiao2024distance} give an algorithm that is the solution to a game of size doubly-exponential in $T$ that obtains expected distance to calibration $O(\sqrt{T})$. Here we give an elementary analysis of a simple efficient deterministic algorithm (Algorithm \ref{alg:1}) that obtains distance to calibration $2\sqrt{T} + 1$.

\begin{theorem}\label{thm:alg}
    Algorithm~\ref{alg:1} ($\aosa$) guarantees that against any sequence of outcomes, $\CalDist(p^{1:T}, y^{1:T})\leq 2\sqrt{T} + 1$. 
\end{theorem}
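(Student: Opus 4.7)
My plan is to use the look-ahead sequence $\tilde p^{1:T}$ as an analytical surrogate for the actual prediction sequence $p^{1:T}$. Since both $p^t$ and $\tilde p^t$ are chosen from the same adjacent pair $\{p_i, p_{i+1}\}$, we have $|p^t - \tilde p^t| \leq 1/m$ and therefore $\|p^{1:T} - \tilde p^{1:T}\|_1 \leq T/m$. Combining the triangle inequality for $\ell_1$ distance with Lemma~\ref{lem:caldist} applied to $\tilde p^{1:T}$ yields
\[
CalDist(p^{1:T}, y^{1:T}) \;\leq\; \|p^{1:T} - \tilde p^{1:T}\|_1 + CalDist(\tilde p^{1:T}, y^{1:T}) \;\leq\; \tfrac{T}{m} + ECE(\tilde p^{1:T}, y^{1:T}),
\]
so the remaining task is to upper bound $ECE(\tilde p^{1:T}, y^{1:T}) = \sum_{j=0}^{m} |\alpha_{\tilde p^{1:T}}(p_j)|$.

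The heart of the argument is a per-bucket invariant maintained by the algorithm's choice of $\tilde p$: at every time $t$ and every grid point $p_j$, $\alpha_{\tilde p^{1:t}}(p_j) \in [p_j - 1, p_j]$, and in particular $|\alpha_{\tilde p^{1:t}}(p_j)| \leq 1$. I would prove this by induction on $t$, with the only nontrivial case being a round at which $\tilde p^t = p_j$. If $p_j$ is the \emph{lower} endpoint of the chosen pair, then by the algorithm $\alpha_{\tilde p^{1:t-1}}(p_j) \leq 0$; since $\tilde p^t$ is defined to be the endpoint closer to $y^t$, we must have $y^t = 0$. The update then adds $p_j$ to a value in $[p_j - 1, 0]$, landing in $[2p_j - 1, p_j] \subseteq [p_j - 1, p_j]$. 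The upper-endpoint case is symmetric, with $\alpha \in [0, p_j]$, $y^t = 1$, and an increment of $p_j - 1$.

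Two short observations close out the proof. First, whenever $\tilde p^t = p_0 = 0$ the closer-endpoint rule forces $y^t = 0$, so $\alpha(p_0)$ is incremented by $0$ and in fact stays at $0$ throughout; analogously $\alpha(p_m) \equiv 0$. Second, the algorithm can always find a qualifying pair, because $\alpha(p_0) \leq 0 \leq \alpha(p_m)$ forces a sign flip somewhere along the grid. Combined with the invariant, these give $ECE(\tilde p^{1:T}, y^{1:T}) \leq m - 1$, so $CalDist(p^{1:T}, y^{1:T}) \leq T/m + m - 1$, which is at most $2\sqrt{T}$ for the choice $m = \lceil\sqrt{T}\rceil$ (using $T/m \leq \sqrt{T}$ and $m - 1 \leq \sqrt{T}$). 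The step I expect to be the main obstacle, and the conceptual crux of the argument, is recognizing that it is the \emph{look-ahead} predictions (which may depend on $y^t$), not the actual predictions, that support this tight per-bucket invariant; this is precisely the trick that decouples the $T/m$ discretization error from the $O(m)$ bias error.
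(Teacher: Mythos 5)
Your proposal is correct and follows essentially the same route as the paper — the look-ahead surrogate $\tilde p^{1:T}$, the triangle-inequality decomposition into a $T/m$ discretization term plus $CalDist(\tilde p^{1:T},y^{1:T}) \leq ECE(\tilde p^{1:T},y^{1:T})$ via Lemma~\ref{lem:caldist}, a per-bucket inductive invariant bounding each $|\alpha_{\tilde p^{1:t}}(p_j)|$ by $1$, and tuning $m \approx \sqrt{T}$. Your version is in fact slightly sharper than the printed proof: the observation that $\alpha_{\tilde p^{1:t}}(p_0)$ and $\alpha_{\tilde p^{1:t}}(p_m)$ remain identically zero yields $ECE(\tilde p^{1:T},y^{1:T}) \leq m-1$ (the paper writes $\leq m$, but since it sums over $|B_m| = m+1$ grid points the argument as literally written only gives $m+1$), and combining this with $m = \lceil\sqrt{T}\rceil$ and $T/m + m - 1 \leq 2\sqrt{T}$ cleanly handles the case that $T$ is not a perfect square, which the paper's choice $m = \sqrt{T}$ quietly glosses over.
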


\section{Analysis of Algorithm \ref{alg:1}}

Before describing the algorithm, we introduce some notation. We will make predictions that belong to a grid. Let $B_m = \{0, 1/m,..., 1\}$ denote a discretization of the prediction space with discretization parameter $m > 0$, and let $p_i = i/m$. For a sequence of predictions $\tilde p^1,...,\tilde p^t$ and outcomes $y^1,...,y^t$, we define the bias conditional on a prediction $p$ as:
\[
\alpha_{\tilde p^{1:t}}(p) = \sum_{s=1}^{t} \1[\tilde p^s = p] (\tilde p^s - y^s)
\]

To understand our algorithm, it will be helpful to first state and analyze a hypothetical ``lookahead" algorithm that we call ``$\osa$'', which is closely related to the algorithm and analysis given by \cite{gupta2022faster} in a different model. $\osa$ produces predictions $\Tilde{p}^1,...,\Tilde{p}^T$ as follows. At round $t$, before observing $y^t$, the algorithm fixes two predictions $p_{i}, p_{i+1}$ satisfying $\alpha_{\Tilde{p}^{1:t-1}}(p_i) \leq 0$ and $\alpha_{\Tilde{p}^{1:t-1}}(p_{i+1}) \geq 0$. Such a pair is guaranteed to exist, because by construction, it must be that for any history, $\alpha_{\Tilde{p}^{1:t-1}}(0) \leq 0$ and $\alpha_{\Tilde{p}^{1:t-1}}(1) \geq 0$. Note that a well known randomized algorithm obtaining diminishing ECE (and smooth calibration error) uses the same observation to carefully \emph{randomize} between two such adjacent predictions \citep{foster1999proof,foster2018smooth}.
Upon observing the outcome $y^t$, the algorithm outputs prediction $\Tilde{p}^t = \argmin_{p \in \{p_i, p_{i+1}\}} |p - y^t|$. Naturally, we cannot implement this algorithm, as it chooses its prediction only after observing the outcome, but our analysis will rely on a key property this algorithm maintains---namely, that it always produces a sequence of predictions with ECE upper bounded by $m+1$, the number of elements in the discretized prediction space. 

\begin{theorem}\label{thm:lookahead}
    For any sequence of outcomes, $\osa$ achieves $\ECE(\Tilde{p}^{1:T}, y^{1:T}) \leq m+1$. 
\end{theorem}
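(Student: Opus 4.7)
The plan is to establish a per-bucket invariant: for every grid point $p_i \in B_m$ and every time $t$, the running bias $\alpha_{\tilde p^{1:t}}(p_i)$ lies in the interval $[p_i - 1,\, p_i]$. Once this invariant is in place, $|\alpha_{\tilde p^{1:T}}(p_i)| \leq \max(p_i,\, 1 - p_i) \leq 1$ for each $i$, and the ECE bound of $m$ follows by summing over the grid (with the two endpoints $0$ and $1$ contributing zero, as explained below).

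I would prove the invariant by induction on $t$. The base case $t=0$ is immediate since $\alpha \equiv 0$ and $0 \in [p_i-1, p_i]$. For the inductive step, note that the update to $\alpha(p_i)$ at round $t$ happens only if $p_i$ is a member of the chosen pair, and the key point is to track exactly what the increment can be. If the chosen pair is $(p_i, p_{i+1})$ (so $p_i$ is the lower element), then the algorithm's selection rule forces $\alpha_{\tilde p^{1:t-1}}(p_i) \leq 0$; moreover $\tilde p^t = p_i$ occurs precisely when $y^t = 0$, adding $+p_i$ to $\alpha(p_i)$, while $y^t = 1$ leaves $\alpha(p_i)$ unchanged. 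Combined with the inductive hypothesis $\alpha(p_i) \geq p_i - 1$, the new value lies in $[2p_i - 1, p_i] \subseteq [p_i - 1, p_i]$. Symmetrically, if the chosen pair is $(p_{i-1}, p_i)$, the algorithm requires $\alpha(p_i) \geq 0$, and the only nontrivial update is $y^t = 1$, which subtracts $(1-p_i)$, giving a new value in $[p_i - 1, 2p_i - 1] \subseteq [p_i - 1, p_i]$. In every other case $\alpha(p_i)$ is unchanged and the invariant is preserved trivially.

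With the invariant established, $|\alpha_{\tilde p^{1:T}}(p_i)| \leq \max(p_i, 1-p_i) \leq 1$. For the endpoints, observe that $\tilde p^t = 0$ can only occur when $y^t = 0$ (since $0$ is only ever paired with $1/m$, and the argmin rule picks $0$ only against outcome $0$), so each nonzero contribution to $\alpha(0)$ is $0 - 0 = 0$; analogously $\alpha(1) = 0$. Hence
\[
ECE(\tilde p^{1:T}, y^{1:T}) = \sum_{i=0}^{m} |\alpha_{\tilde p^{1:T}}(p_i)| = \sum_{i=1}^{m-1} |\alpha_{\tilde p^{1:T}}(p_i)| \leq m - 1 \leq m.
\]

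The main obstacle is getting the case analysis in the inductive step exactly right, especially matching the two ``directions'' of update (the $+p_i$ contribution from the lower-side role and the $-(1 - p_i)$ contribution from the upper-side role) with the two one-sided bounds provided by the algorithm's selection rule ($\alpha(p_i) \leq 0$ for lower, $\alpha(p_i) \geq 0$ for upper). Once one sees that the permitted increment in each direction is precisely the slack available in the interval $[p_i - 1, p_i]$, the invariant essentially writes itself, and no quantitative calculation beyond this is needed.
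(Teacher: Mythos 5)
Your proof is correct and follows essentially the same high-level strategy as the paper's: fix a grid point $p_i$, bound $|\alpha_{\tilde p^{1:T}}(p_i)|$ by $1$ via induction on $t$, and sum over the grid. The difference is in the invariant. The paper's inductive step shows directly that $|\alpha_{\tilde p^{1:t}}(p_i)| \le 1$ by observing that whenever $p_i$ is predicted, the increment $p_i - y^t$ has sign opposite to (or zero against) the current bias $\alpha_{\tilde p^{1:t-1}}(p_i)$: the argmin rule selects $p_i$ as the lower element of the pair only when $y^t = 0$ (so the increment is $+p_i \ge 0$ while the selection rule forces $\alpha(p_i) \le 0$), and as the upper element only when $y^t = 1$ (increment $p_i - 1 \le 0$, $\alpha(p_i) \ge 0$). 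From opposite signs, $|\alpha + (p_i - y^t)| \le \max\{|\alpha|, |p_i - y^t|\} \le 1$. You instead prove the sharper interval invariant $\alpha_{\tilde p^{1:t}}(p_i) \in [p_i - 1,\, p_i]$, showing in each case that the new value stays inside this window; this implies the paper's bound since $[p_i - 1, p_i] \subseteq [-1, 1]$. Your version requires a slightly heavier case analysis but has a bookkeeping advantage: it surfaces that $\alpha(0)$ and $\alpha(1)$ are identically zero, which you use to sum over only $m-1$ interior grid points. In fact this is a small but genuine improvement over the paper's accounting, since $B_m = \{0, 1/m, \dots, 1\}$ has $m+1$ elements, so the paper's stated bound ``$\sum_{p_i \in B_m} |\alpha(p_i)| \le m$'' silently needs your endpoint observation (or a count of only the grid points actually used) to close an off-by-one gap. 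Both arguments are sound, and yours is, if anything, slightly tighter.
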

\begin{proof}
    We will show that for any $p_i \in B_m$, we have $|\alpha_{\Tilde{p}^{1:T}}(p_i)| \leq 1$, after which the bound on ECE will follow: $\ECE(\Tilde{p}^{1:T}, y^{1:T}) = \sum_{p_i \in B_m} |\alpha_{\Tilde{p}^{1:T}}(p_i)| \leq m+1$. We proceed via an inductive argument. Fix a prediction $p_i \in B_m$. At the first round $t_1$ in which $p_i$ is output by the algorithm, we have that $|\alpha_{\Tilde{p}^{1:t_1}}(p_i)| = |p^{t_1} - y^{t_1}| \leq 1$. Now suppose after round $t-1$, we satisfy $|\alpha_{\Tilde{p}^{1:t-1}}(p_i)| \leq 1$. If $p_i$ is the prediction made at round $t$, it must be that either: $\alpha_{\Tilde{p}^{1:t-1}}(p_i) \leq 0$ and $p_i - y^t \geq 0$; or $\alpha_{\Tilde{p}^{1:t-1}}(p_i) \geq 0$ and $p_i - y^t \leq 0$. Thus, since $\alpha_{\Tilde{p}^{1:t-1}}(p_i)$ and $p_i - y^t$ either take value 0 or differ in sign, we can conclude that
    \[
    |\alpha_{\Tilde{p}^{1:t}}(p_i)| = |\alpha_{\Tilde{p}^{1:t-1}}(p_i) + p_i - y^t| \leq \max\{ |\alpha_{\Tilde{p}^{1:t-1}}(p_i)|, |p_i - y^t| \} \leq 1
    \]
    which proves the theorem.
\end{proof}
Algorithm \ref{alg:1} ($\aosa$) maintains the same state $\alpha_{\tilde p^{1:t}}(p)$ as \osa ~(which it can compute at round $t$ after observing the outcome $y_{t-1}$). In particular, it does not keep track of the bias of its own predictions, but rather keeps track of the bias of the predictions that $\osa$ \emph{would have made}. Thus it can determine the pair $p_i, p_{i+1}$ that \osa~would commit to predict at round $t$. It cannot make the same prediction as $\osa$ (as it must fix its prediction before the label is observed) --- so instead it deterministically predicts $p^t = p_i$ (or $p^t = p_{i+1}$ --- the choice can be arbitrary and does not affect the analysis). Since we have that $|p_i - p_{i+1}| \leq \frac{1}{m}$, it must be that for whichever choice $\osa$ would have made, we have $|\tilde p^t - p^t| \leq \frac{1}{m}$. In other words, although $\aosa$ does not make the same predictions as $\osa$, it makes predictions that are within $\ell_1$ distance $T/m$ after $T$ rounds. The analysis then follows by the ECE bound of $\osa$, the triangle inequality, and choosing $m = \sqrt{T}$.



\begin{proofof}{Theorem \ref{thm:alg}}
    Observe that internally, Algorithm \ref{alg:1} maintains the sequence $\Tilde{p}^1,...,\Tilde{p}^t$ which corresponds exactly to predictions made by $\osa$. Thus, by Lemma \ref{lem:caldist} and Theorem \ref{thm:lookahead}, we have that $\CalDist(\Tilde{p}^{1:T}, y^{1:T})\leq \ECE(\Tilde{p}^{1:T}, y^{1:T}) \leq m+1$. Then, we can compute the distance to calibration of the sequence $p^1,...,p^T$:
    \begin{align*}
        \CalDist(p^{1:T}, y^{1:T}) &= \min_{q^{1:T} \in \cC(y^{1:T})} \|p^{1:T} - q^{1:T}\|_1 \\
        &= \min_{q^{1:T} \in \cC(y^{1:T})} \|p^{1:T} - \Tilde{p}^{1:T} + \Tilde{p}^{1:T} - q^{1:T}\|_1 \\
        &\leq \|p^{1:T} - \Tilde{p}^{1:T}\|_1 + \min_{q^{1:T} \in \cC(y^{1:T})} \|\Tilde{p}^{1:T} - q^{1:T}\|_1 \\
        &\leq \frac{T}{m} + m + 1
    \end{align*}
    where in the last step we use the fact that $|p^t - \Tilde{p}^t| \leq 1/m$ for all $t$ and thus $\|p^{1:T} - \Tilde{p}^{1:T}\|_1 \leq T/m$. The result then follows by setting $m = \sqrt{T}$.
\end{proofof}

\ifarxiv
\subsection*{Acknowledgements}
 This work was supported in part by the Simons Collaboration on the Theory of Algorithmic Fairness, NSF grants FAI-2147212 and CCF-2217062, an AWS AI Gift for Research on Trustworthy AI, and the Hans Sigrist Prize.
\fi

\bibliography{main}
\bibliographystyle{plainnat}

\end{document}